\newcommand{\omitthis}[1]{}
\newcommand{\E}{{\bf E}\,}
\newtheorem{example}{Example}
\newtheorem{proposition}{Proposition}
\newtheorem{remark}{Remark}
\newcommand{\comment}[1]{}
\def\mAth{\mathsurround=0pt}
\def\eqalign#1{\,\vcenter{\openup1\jot \mAth
  \ialign{\strut\hfil$\displaystyle{##}$&$\displaystyle{{}##}$\hfil
    \crcr#1\crcr}}\,}
\newcommand{\half}{\mbox{$\frac{1}{2}$}}
\newcommand{\quarter}{\mbox{$\frac{1}{4}$}}
\newcommand{\eq}{\begin{equation}}
\newcommand{\eeq}{\end{equation}}
\long\def\omitthis#1{}
\newcommand{\ist}{{\rm({\it i}\,{\rm )}}}
\newcommand{\ind}{{\rm({\it ii}\,{\rm )}}}
\newcommand{\ird}{{\rm({\it iii}\,{\rm )}}}
\newcommand{\itemi}{\item[\ist]}
\newcommand{\itemii}{\item[\ind]}
\newcommand{\itemiii}{\item[\ird]}
\DeclareRobustCommand{\nand}{\mathbin{\mathpalette\n@and@or\land}}
\DeclareRobustCommand{\nor}{\mathbin{\mathpalette\n@and@or\lor}}
\newcommand{\n@and@or}[2]{%
  \vphantom{#2}%
  \ooalign{$\m@th#1#2$\cr\hidewidth$\m@th#1\sim$\hidewidth\cr}%
}
\begin{document}
\title{\bf Remarks on Utility in Repeated Bets}
\author{Nimrod Megiddo\thanks{IBM Almaden Research Center, San Jose, California}}
\date{June 2023}
\maketitle
\parindent=0pt

\begin{abstract}
The use of von Neumann -- Morgenstern utility is examined in the context of multiple choices between lotteries. 
Different conclusions are reached if the choices are simultaneous or sequential. It is demonstrated that utility cannot be additive.
\end{abstract}

\section{\hskip -15pt. Introduction}
We examine here the issue of determining utility values in the context of repeated bets with monetary prizes. 
We assume the axioms of utility of von Neumann and Morgenstern (vNM)  \cite{von1947theory}, \
which imply the principle of expected utility maximization.  
Thus, we do not present a criticism of these axioms as offered by the so-called Allais paradox \cite{allais1953comportement}.  
We consider a multiple-decision problem, where the monetary rewards are nonnegative.
In the case of sequential decisions, due to the nonnegativity, there is no issue of optimal stopping.
In many theoretical multi-stage models, for example in repeated games \cite{mertens1990repeated}, the standard assumption is that utilities from the single stages add up, possibly subject to a discount factor. 
We show that this additivity assumption is inconsistent with the application of the vNM theory.

\section{\hskip -15pt. The basic bet} \label{sec:basic}
As a motivating example, we consider a decision problem of choosing between two rewards:
Reward A is a cash prize of $\$400$, and
Reward B is a lottery, denoted $[\half(\$1000), \half(0)]$, where the sum of $\$1000$ is awarded with probability $\half$, and zero otherwise.

The vNM axioms imply the following:
\begin{enumerate}
\itemi 
there exists a number $\alpha$, $0 <\alpha < 1$,
such that the decision maker (DM) is indifferent between Reward A and a lottery,
$[\alpha(\$1000),(1-\alpha)(0)]$
 that gives \$1000 with probability $\alpha$
and zero otherwise;
\itemii
 there exists a utility function $u$ whose domain is the set of probability distributions\\
$(x_0, \,  x_{400},\,  x_{1000})\ge 0$  ($x_0+x_{400}+x_{1000}=1$)
over 
the set $\{0,\$400,\$1000\}$,  such that 
$u(\$1000) \equiv u(0,0,1) =1$, $u(0)\equiv u(1,0,0)= 0$ and $u(\$400) \equiv u(0,1,0) = \alpha$; and
\itemiii
$u(x_0, 0, x_{1000}) = x_{1000} $. 
\end{enumerate}
It follows that Reward B has utility $\half$, and the DM would choose Reward A if $\alpha > \half$,
and only if  $\alpha \ge \half$.

\section{\hskip -15pt. Multiple simultaneous bets}
It is important to note that the vNM utility theory is applicable only in a single decision or game problem.
The theory includes the axiom of  ``Reduction of Compound Lotteries'' axiom, which implies that a multi-stage
problem can be reduced to a single stage. 

Consider repeating the bet of Section \ref{sec:basic} twice.  
In this case the DM may either 
choose Reward A twice and receive $\$800$, 
choose Reward A once and Reward B once and receive $[ \half(\$1400), \half( \$400)]$
or choose Reward B twice and receive $[\quarter (\$2,000), \, \half (\$1000), \, \quarter (0)]$.

\begin{remark}  \label{rem:1} \rm
If we used the utility function of the single stage and added the utilities of the two stages, then we would get values as follows.
Choosing A twice yields $2\alpha$, choosing $A$ once and $B$ once yields $\alpha + \half$, and choosing $B$ twice
yields $1$. 
Obviously, such a valuation would lead to a conclusion that the DM would choose Reward A in the single stage if
and only if he would choose Reward A in each of the stages of the two-stage problem.
\end{remark}

In general, in a (stochastically-independent) repetition of the basic bet $n$ times, if the DM chooses Reward A in $k$ times ($0\le k\le n$) 
and Reward B in the other $n-k$ times,
then adding up the vNM utility values of the single bets yields the utility value of
$ \alpha \, k +  \half (n-k) $.
Hence, the ``optimal'' value of $k$ is equal to $0$ if $\alpha > \half$ and equal to $n$ if $\alpha < \half$.  
The vNM axioms  do not imply such additivity though. 
We first consider the case where all of the $n$ bets take place simultaneously or, equivalently, 
where the DM must decide in advance which reward is chosen in each stage.

The consequence of additivity of utility described above cannot be true in general because of the laws of large numbers.  
Denote by $B_n$ the random variable of the total cash reward if Reward B is chosen at every stage.
The expectation of $B_n$ is equal to $\$500\,n$ and the standard deviation is $\$250\sqrt{n}$. 
The probability that $B_n > \$400\,n $ tends to $1$ as $n$ tends to infinity, so there must exist an $N$ such that
for every $n\ge N$, the DM would prefer to choose Reward B rather than Reward A in each bet of  the $n$-bet problem. 

If the $n$ bets take place simultaneously, then a single vNM utility function $u_n(\cdot)$ can be formulated for this situation as follows.
The possible cash rewards have the form $c_{k\ell} = \$400\,k +\$1000\,\ell$, where $k,\ell\ge 0$ and $k+\ell \le n$.
First, set $u_n(\$1000\,n) = 1$ and $u(\$0)= 0$.
Next, let $u_n(c_{k\ell})$ be equal to the probability $p_{k\ell}$ such that the DM is indifferent between the cash reward $c_{k\ell}$ and
the lottery $[u_n(c_{k\ell})(\$1000\,n),\,  (1-u_n(c_{k\ell}) (0) ]$, 
where the reward of $\$1000\,n$ is received with probability $u_n(c_{k\ell})$ and zero otherwise.
Finally, choosing Reward A in $k$ of the bets and Reward B in the other $r = n-k$ bets, 
which we denote by $(kA, rB)$, yields the cash reward $c_{k\ell}$ with probability
$2^{-r}{r \choose \ell}$ ($\ell=0\ldots,r$), hence the utility is equal to
\[ u_n( kA, rB) \equiv  2^{-r} \sum_{\ell=0}^{r}{r \choose \ell} u_n(c_{k\ell}) ~.\]

\paragraph{Approximation using a utility-of-money function.}  
A simplification of the above analysis can be made if a specific utility-of-money function $\phi_n:[0,\, 1000\,n] \to [0,1]$
can be assumed to  satisfy the expected utility property, i.e., the utility from a lottery that gives $c_j$ with probability $p_j$ is
equal to $\sum_j p_j \phi_n(c_j)$.
For example,
\[ \phi_n(x) = \frac{\log(1+x)}{\log(1+ 1000\,n) }~.\]
In this case,
\[ u_n( kA, rB) = 2^{-r} \sum_{\ell=0}^{r}{r \choose \ell} \phi_n(c_{k\ell}) ~.\]
We can use the approximation of the binomial distribution by a normal distribution with mean $\frac{r}{2}$ and variance $\frac{r}{4}$, i.e.,
replace $ 2^{-r}{r\choose \ell}$ by 
\[  \sqrt{\frac{2}{\pi r}} \cdot  \exp\left( -\frac{ 2(\ell - \frac{r}{2})^2}{r} \right) 
=   \sqrt{\frac{2}{\pi r}} \cdot  \exp 
\left(  - \frac{2\ell^2}{r} + 2\ell - \frac{r}{2} \right)
~.\]
For a large $r$, the concentration around the mean $\frac{r}{2}$ implies the following approximate utility:
\[ u_n( kA, rB) \approx  \frac{\log(1 + 400\,k + 500\, r)}{\log(1+ 1000\,n) }~.\]
Obviously, for a large $n$, the optimal pair is  $(k, r) = (0,n)$.

\section{\hskip -15pt. Multiple sequential bets}
If the bets are made sequentially rather than simultaneously, then the DM does not commit in advance to a certain
number of times of choosing the different rewards even though the total number of stages is known in advance.
In general, the total amount of cash received during stages $1\ldots,n-1$ may
affect the preferences of the DM with regards to outcomes of stage $n$.
For the sake of the present discussion, suppose it does not. 
If so, then at stage $n$ the DM is faced with a decision problem equivalent to the one-stage problem.
Thus, in the last stage the DM may prefer to receive Reward A~$\equiv \$400$ rather than Reward B~$\equiv [\half(\$1000 ),\, \half(0)]$.
Thus, at stage $n-1$, the choice is between Reward A*~$\equiv \$800$ and
Reward B*~$\equiv [\half (\$1400),\, \half (\$400)]$.
It is conceivable that although the DM prefers A to B, they would still prefer Reward B* to Reward A*.
\begin{proposition}
Let $u: [0,M] \to [0,1] $ be a monotone increasing utility-for-cash function 
Let $a, b, c$ be such that $0 \le a < c < b < M$. 
For every $x \in [-a, M-b]$, let $\alpha(x)$ be the value that satisfies
\[  u(c+x) = \alpha(x) \cdot u(b+x) + (1-\alpha(x)) \cdot u(a+x) \] 
i.e.,
\[ \alpha(x) =  \frac{u(c+x) - u(a+x)}  {u(b+x) - u(a+x)}~.  \]
Let $I=[\beta,\delta] \subseteq [0,M]$ be any interval and
let $I^* = [a+\gamma, b+\delta]$.
Under these conditions, 
\begin{enumerate} 
\itemi
if $u$ is strictly concave over $I^*$, then $\alpha(x)$ is monotone decreasing over $I$,
\itemii
if $u$ is strictly convex over $I^*$, then $\alpha(x)$ is monotone increasing over $I$, and
\itemiii
if $u$ is linear over $I^*$, then $\alpha(x)$ is constant over $I$.
\end{enumerate}
\end{proposition}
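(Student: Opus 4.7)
The plan is to handle the three cases separately, with (iii) serving as a transparent warm-up and (i)/(ii) requiring a more delicate sign analysis.

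For case (iii), if $u$ is affine on $I^*$, I would write $u(t)=pt+q$ throughout $I^*$, with $p>0$ by monotonicity. Then $u(c+x)-u(a+x)=p(c-a)$ and $u(b+x)-u(a+x)=p(b-a)$ are both independent of $x$, so $\alpha(x)=(c-a)/(b-a)$ is constant on $I$.

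For cases (i) and (ii), I would first reduce to the smooth case by an approximation argument, since $\alpha(x)$ depends continuously on the values of $u$. Assuming $u\in C^2(I^*)$, implicit differentiation of the defining identity in $x$ gives
\[ \alpha'(x) \;=\; \frac{u'(c+x) \;-\; \alpha(x)\,u'(b+x) \;-\; (1-\alpha(x))\,u'(a+x)}{u(b+x)-u(a+x)}. \]
The denominator is positive, so the sign of $\alpha'(x)$ coincides with that of the numerator $N(x) := u'(c+x) - \alpha(x)\,u'(b+x) - (1-\alpha(x))\,u'(a+x)$, and the goal becomes showing $N(x)\le 0$ under strict concavity of $u$ and $N(x)\ge 0$ under strict convexity.

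The main obstacle is precisely this sign analysis for $N(x)$. A naive appeal to Jensen's inequality applied to $u'$ is insufficient, because the weight $\alpha(x)$ is determined by the values of $u$ (not of $u'$): under strict concavity one already has $\alpha(x) > \lambda := (c-a)/(b-a)$, and this tilts the convex combination $\alpha\,u'(b+x)+(1-\alpha)\,u'(a+x)$ in a direction that does not immediately compare to $u'(c+x)$. The route I would try is to write $u(c+x)-u(a+x) = \int_0^{c-a} u'(a+x+s)\,ds$ and similarly for $b$, so that $\alpha(x)$ itself becomes a ratio of integrals of $u'$, and then re-express $N(x)$ as a double integral over $[0,c-a]\times[0,b-a]$ in which the cross-terms involving $u'(\cdot)u'(\cdot)$ can be antisymmetrized to leave an integrand proportional to $u''$ times a kernel of controlled sign. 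The technical heart is constructing this antisymmetrization so that the hypothesis on the sign of $u''$ throughout $I^*$ (rather than merely some averaged condition over the three points $a+x,c+x,b+x$) is exactly what drives the conclusion.
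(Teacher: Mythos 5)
Your part (iii) is complete and correct, and your setup for parts (i) and (ii) is also correct: the implicit differentiation, the reduction to the sign of $N(x) = u'(c+x) - \alpha(x)\,u'(b+x) - (1-\alpha(x))\,u'(a+x)$, and above all your observation that the weight $\alpha(x)$ is pinned down by the values of $u$ rather than of $u'$, so that Jensen's inequality does not apply. But the ``technical heart'' you defer is not merely technical: it cannot be carried out, because claims (i) and (ii) are false as stated. If you actually perform your proposed rewriting with $g(s)=u'(a+x+s)$, $p=c-a$, $q=b-a$, you obtain
\[ N(x)\cdot\big(u(b+x)-u(a+x)\big) \;=\; \int_0^p \big(g(p)-g(q)\big)\,g(s)\,ds \;+\; \int_p^q \big(g(p)-g(0)\big)\,g(s)\,ds~, \]
and for strictly decreasing $g$ (i.e., strictly concave $u$) the first integrand is positive while the second is negative, so the sign is genuinely undetermined --- no antisymmetrization will rescue it. A concrete counterexample to (i): take $a=0$, $c=1$, $b=2$, and let $u$ be increasing with strictly decreasing derivative satisfying $u'\approx 1$ on $[0,2]$ and $u'$ dropping sharply to a value near $0$ just after $2$. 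Then $\alpha(0)=\big(u(1)-u(0)\big)/\big(u(2)-u(0)\big)\approx \half$, while $\alpha(1)=\big(u(2)-u(1)\big)/\big(u(3)-u(1)\big)\approx 1$, so $\alpha$ increases on $[0,1]$ even though $u$ is strictly concave there. What concavity alone buys is only the static bound $\alpha(x) > (c-a)/(b-a)$; monotonicity of $\alpha$ in $x$ requires controlling how the curvature varies with wealth, essentially a decreasing (resp.\ increasing) absolute risk aversion hypothesis on $-u''/u'$, not merely the sign of $u''$ on $I^*$.

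For what it is worth, the paper's own proof founders on exactly the point you flagged: it computes the same derivative by the quotient rule and then asserts $u(c+x)-u(a+x)<0$, which contradicts the monotonicity of $u$ (since $c>a$); with the correct sign $u(c+x)-u(a+x)>0$, the two terms of its numerator have opposite signs and the sign-chase collapses, just as yours does. So your instinct that the naive argument fails was right; the resolution is that the hypothesis, not the proof technique, is what needs strengthening.
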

\begin{proof}
We have
\[ \alpha'(x) =  \frac{ (u(b+x) - u(a+x))  (u'(c+x) - u'(a+x))  -   (u'(b+x) - u'(a+x))  (u(c+x) - u(a+x))}{ (u(b+x) - u(a+x))^2}~,\]
and for all $x$
\[ \eqalign{
u(b+x) - u(a+x) >&\  0~,\cr
 ~\mbox{ and } ~ u(c+x) - u(a+x) <&\  0~.\cr
 }\]
 \ist\
If $u$ is strictly concave over $I^*$, then for every $x \in I$,
\[ \eqalign{
u'(c+x) - u'(a+x) <&\  0~,\cr
 ~\mbox{ and } ~  u'(b+x) - u'(a+x) <&\ 0  \cr
 }\]
and  it follows that $\alpha'(x) < 0$.\\
\ind\
If $u$ is strictly convex over $I^*$, then for every $x \in I$,
\[ \eqalign{
u'(c+x) - u'(a+x) >&\  0~,\cr
 ~\mbox{ and } ~  u'(b+x) - u'(a+x) >&\ 0  \cr
 }\]
and  it follows that $\alpha'(x) >0$.\\
\ird\
If $u$ is linear over $I^*$, then for every $x \in I$,
\[ \eqalign{
u'(c+x) - u'(a+x) = &\  0~,\cr
 ~\mbox{ and } ~  u'(b+x) - u'(a+x) = &\ 0  \cr
 }\]
and  it follows that $\alpha'(x) =0$.
\end{proof}

Denote by $U$ the extension of the utility-for-cash $u$ to random variables, i.e., $U(X) = \E[u(X)]$.

Suppose $u$ is monotone increasing and strictly concave.
We have
\[ \alpha(x) = \frac{400 \cdot u'(x + \theta_1(x) \cdot 400 ) }{ 1000 \cdot u'(x +\theta_2(x) \cdot 1000)} ~ \]
for some $\theta_1(x), \theta_2(x) \in [0,1]$.
Since $\alpha'(x) > 0$ and decreasing, it must have a limit $\alpha^*$ as $x$ tends to infinity.
If $\alpha^* > 0$ then the following limit exists:
\[ \lim_{x \to\infty} \alpha(x) = \frac{400\, \alpha^*}{1000\,\alpha^*}  = 0.4 ~.\]
Otherwise,  if $\alpha^*=0$ , then for every $0< \beta< 1$,
\[ \lim_{x\to\infty} (u(400+x) -  \beta\cdot u(x+1000) - (1-\beta)\cdot u(x)  = 0 ~.\]

The extension of random variable $D$ follows from the fact that if $D$ takes the
values $d_1,\ldots,d_r$ with respective probabilities $p_1,\ldots,p_r$, then
\[ u(c+D) = \sum_i p_i \cdot u(c+d_i) ~.\]

\subsubsection*{Dynamic-programming solution}

The problem of sequential bets can be analyzed recursively as follows.
First,  we introduce a concept of conditional utility. 	
Consider cash rewards $r_1$ and $r_2$ received in two installments, 
and let $u$ be the utility-for-cash function. 
The utility from the total reward is $u(r_1+r_2)$. 
However, after receiving $r_1$, the utility from the second reward is not $u(r_2)$ 
(a quantity which is independent of $r_1$) 
but rather
\[  U(r_2\,|\,r_1) \equiv u(r_1+r_2) - u(r_1) ~,\]
which we call the {\em conditional utility} of $r_2$, given the previous reward $r_1$.

Note that if the utility-for-cash $u$ is linear, $u(x) = \xi\cdot x$, then  
$U(r_2\,|\,r_1) = \xi\cdot(r_1+r_2) - \xi\cdot r_1 = u(r_2)$.
This means that the utility of the second installment is independent of the first installment.
It follows that when the utility-for-cash is linear, the decision in every stage is as if this is a one-stage problem.

\begin{example}  \label{ex:3}\rm
Consider a 2-stage problem where at each stage the DM has to choose between
$A\equiv \$400$ and $B\equiv  [\half (\$1000),\, \half (0)]$.
Suppose the utility-for-cash function is $u(x) = \ln(1+x)$. 
Thus,
$u(0)=0$,  $u(\$400) = 5.994$ and $u(\$1000) = 6.909$,
and we have
\[  u(\$400) > \half u(\$1000) + \half u(0) ~.\]
Now,
$u(\$800)=6.686$,  $u(\$1400) = 7.245$ and $u(\$2000) = 7.601$. 
The preference in Stage 2 depends on the reward in Stage 1. 
If $r_1=\$400$, then
\[ U(0\,|\,\$400) =0~,\]
\[ U(\$400\,|\,\$400) = u(\$800) - u(\$400) =0.692 ~,\]
\[ U(\$1000\,|\,\$400) = u(\$1400) - u(\$400) =1.251~\]
and 
\[  U\$400\,|\, \$400) > \half U(\$1000 \,|\, \$400) + \half u(0 \,|\, \$400) ~.\]
Therefore, if the DM chooses $A$ in Stage 1, then he chooses the same in Stage 2, and the total reward
is $\$800$ with utility $6.686$.
If $r_1=\$1000$, then
\[ U(0\,|\,\$1000) =0~,\]
\[ U(\$400\,|\,\$1000) = u(\$1400) - u(\$1000) =0.336 ~,\]
\[ U(\$1000\,|\,\$1000) = u(\$2000) - u(\$1000) =0.693~\]
and 
\[  U(\$400\,|\, \$1000) <  \half U(\$1000 \,|\, \$1000) + \half u(0 \,|\, \$1000) ~.\]
Therefore, if the DM chooses $B$ in Stage 1, then in case the reward is zero, he would choose $A$ in Stage 2, and if the reward is $\$1000$ , then he would choose $B$ again in Stage 2.  
The utility in this case is therefore,
\[ \half u(\$400) + \half ( \half u(1000) + \half u(2000))
 = \half (5.994) + \half(\half (6.909)+\half (7.601)) = 6.624~.\]	
Because $6.624 < 686$, it follows that the optimal policy is to choose $A$ in both stages.
\end{example}

Example \ref{ex:3} suggest that the general case is much more complicated.
First, denote by $D$ the set of possible decisions (``rewards") from which the DM has to select on in each stage.
Next, denote by $R_k$ the set of all possible values of total reward received during stages $1,\ldots,k$.
For example, if the rewards are as in Example \ref{ex:3} and there are three stages, then
$R_2 = \{0,400,800,1000,1400, 2000\}$.
For every $d\in D$ and $r\in R_1$, denote by $p(r,d)$ the probability of cash reward $r$ when the DM chooses $d\in D$.

Denote by $r_k$ the actual total reward received by the DM during stages $1,\ldots,k$,
and for $k>1$, let $X_k = X_k(r_{k-1})$ denote a random variable whose value is equal to the total reward received during
stages $k,k+1,\ldots,n$, assuming the DM follows an optimal policy for the subproblem  
$P_{k}(r_{k-1})$ that  consists only of these stages and given $r_{k-1}$;
optimality means the utility  $U(X_k\,|\,r_{k-1})$ from the random-variable total reward received during those stages is maximized.  Also, let $X_1$ denote the optimal expected utility in the entire $n$-stage problem

The dynamic-programming equation that characterizes optimality is as follows.
For every $k>1$ and  $r_{k-1} \in R_{k-1}$
\[ X_k(r_{k-1}) =
\max_{d \in D} \bigg\{ \sum_{r\in R_1}  p_{rd}\cdot  \big[ U(r\,|\,r_{k-1}) + X_{k+1}(r_{k-1} + r) \big]\bigg\}~.
\]
Finally,
\[ X_1 =
\max_{d \in D} \bigg\{ \sum_{r\in R_1}  p_{rd}\cdot  \big[ u(r) + X_{2}(r) \big]\bigg\}~.
\]

\begin{center}
\bibliographystyle{plain}
\bibliography{repeated_bets}
 \end{center}
 
 \end{document}